\documentclass{article}

\usepackage[accepted]{icml2025}

\usepackage[T1]{fontenc}
\usepackage[utf8]{inputenc}

\usepackage{microtype}
\usepackage{graphicx}
\usepackage{subfigure}   
\usepackage{booktabs}    

\usepackage{amsmath,amssymb,amsfonts,mathtools} 
\usepackage{bm}          
\usepackage{bbm}         
\usepackage{amsthm}      

\usepackage{algorithm}

\makeatletter
\@ifundefined{algorithmic}{}{%

}
\makeatother

\usepackage{algpseudocode}  

\usepackage{hyperref}
\usepackage[capitalize,noabbrev]{cleveref}  

\theoremstyle{plain}
\newtheorem{theorem}{Theorem}[section]

\theoremstyle{definition}
\newtheorem{definition}[theorem]{Definition}
\newtheorem{assumption}[theorem]{Assumption}

\theoremstyle{remark}
\newtheorem{remark}[theorem]{Remark}

\newcommand{\D}{\mathcal{D}}    
\newcommand{\X}{\mathcal{X}}    
\newcommand{\R}{\mathbb{R}}     
\newcommand{\Ppop}{\mathbb{P}}  
\newcommand{\Lmat}{\mathcal{L}} 

\DeclareMathOperator{\E}{\mathbb{E}}
\newcommand{\1}{\mathbbm{1}}

\DeclareMathOperator{\Inf}{Inf}


\newcommand{\eps}{\varepsilon}



\icmltitlerunning{Data Cartography for Detecting Memorization Hotspots}

\begin{document}

\twocolumn[
\icmltitle{Data Cartography for Detecting Memorization \ Hotspots and Guiding Data Interventions in Generative Models}



\icmlsetsymbol{equal}{*}
\begin{icmlauthorlist}
\icmlauthor{Laksh Patel}{equal,ims}
\icmlauthor{Neel Shanbhag}{equal,ims}
\end{icmlauthorlist}

\icmlcorrespondingauthor{Laksh Patel}{lpatel@imsa.edu}
\icmlcorrespondingauthor{Neel Shanbhag}{nshanbhag2@imsa.edu}
\icmlaffiliation{ims}{Illinois Mathematics and Science Academy, Aurora, IL, USA}

\icmlkeywords{Generative Models, Data Cartography, Memorization Detection, Privacy Preservation, Uniform Stability, Influence Functions, Data-Centric Interventions, Forget Events}

\vskip 0.3in
]



\printAffiliationsAndNotice{\icmlEqualContribution} 

\begin{abstract}
Modern generative models risk overfitting and unintentionally memorizing rare training examples, which can be extracted by adversaries or inflate benchmark performance. We propose \emph{Generative Data Cartography (GenDataCarto)}, a data-centric framework that assigns each pretraining sample a difficulty score (early-epoch loss) and a memorization score (frequency of “forget events”), then partitions examples into four quadrants to guide targeted pruning and up-/down-weighting. We prove that our memorization score lower-bounds classical influence under smoothness assumptions and that down-weighting high-memorization hotspots provably decreases the generalization gap via uniform stability bounds. Empirically, GenDataCarto reduces synthetic canary extraction success by over 40\% at just 10\% data pruning, while increasing validation perplexity by less than 0.5\%. These results demonstrate that principled data interventions can dramatically mitigate leakage with minimal cost to generative performance.
\end{abstract}

\section{Introduction}

Generative models have become a cornerstone of modern AI research, achieving unprecedented performance on a wide range of tasks from text completion and code synthesis to image and audio generation. Landmark works such as GPT-3 demonstrated that scaling language models to hundreds of billions of parameters yields emergent capabilities in few-shot learning and knowledge representation \citep{brown2020language}. Diffusion models similarly revolutionized image synthesis by framing generation as a gradual denoising process \citep{ho2020denoising,nichol2021improved}. Despite these breakthroughs, the immense scale and heterogeneity of pretraining corpora—often scraped indiscriminately from the web—pose serious risks relating to privacy, security, and scientific integrity.

\paragraph{Risks of Memorization and Leakage.}
Neural networks can unintentionally memorize exact copies of rare or unique training examples, which adversaries can later extract via black-box or white-box attacks \citep{carlini2021extracting,kuang2021quantifying,song2022auditing}. Such leakage has been demonstrated not only for text but also for images \citep{carlini2023extracting,hayes2021logan} and graph data \citep{sun2021data}. Relatedly, membership inference attacks exploit subtle distributional cues to determine whether a particular sample was used during training \citep{shokri2017membership,yeom2018privacy,choquette2021label}. In practice, even large-scale datasets like The Pile contain private or copyrighted passages that can surface verbatim in model outputs \citep{gao2022pile}.

\paragraph{Benchmark Contamination and Overestimated Performance.}
Generative models are frequently evaluated on benchmarks whose content inadvertently overlaps with training corpora \citep{zimmermann2022survey}. Studies have shown that benchmark leakage can artificially inflate zero-shot and few-shot performance metrics \citep{kandpal2023lost}, undermining the validity of widely reported scaling laws \citep{kaplan2020scaling} and hampering reproducibility.

\paragraph{Model-Centric versus Data-Centric Defenses.}
Model-centric defenses—differentially private training \citep{abadi2016deep,papernot2018scalable}, modified objectives , and post-hoc output filters \citep{dubbinska2024tdattr}—often incur utility trade-offs and significant engineering complexity. By contrast, data-centric strategies have proven effective in supervised settings: dataset cartography uses early-epoch loss and training variance to identify difficult or noisy examples \citep{swayamdipta2020dataset,gao2021teaching}, while influence functions estimate each sample’s impact on model parameters \citep{koh2017understanding,pruthi2020estimating}. Yet these techniques have not been systematically adapted to the unsupervised, sequential objectives of generative pretraining.

\paragraph{Our Contributions.}
To bridge this gap, we introduce \emph{Generative Data Cartography (GenDataCarto)}, a framework that maps each pretraining example into a two-dimensional space defined by:
\begin{itemize}
  \item \textbf{Difficulty score} $d_i$: the mean per-sample loss over an initial burn-in period.
  \item \textbf{Memorization score} $m_i$: the normalized count of “forget events,” where a sample’s loss rises above a small threshold after earlier fitting.
\end{itemize}
We prove that $m_i$ lower-bounds per-sample influence under standard smoothness and convexity assumptions \citep{bousquet2002stability,koh2017understanding}, and derive a uniform-stability bound showing that down-weighting high-$m_i$ examples reduces the expected generalization gap in proportion to the total pruned weight \citep{bousquet2002stability,mukherjee2006learning}. Empirically, GenDataCarto achieves:
\begin{itemize}
  \item A $>40\%$ reduction in synthetic “canary” extraction success for LSTM pretraining.
  \item A $30\%$ drop in GPT-2 memorization on Wikitext-103 at negligible perplexity cost.
\end{itemize}
By focusing on data dynamics rather than purely model internals, GenDataCarto offers a scalable, theoretically grounded toolkit for enhancing the safety and robustness of state-of-the-art generative models.

\section{Preliminaries}
\label{sec:prelim}
\begin{assumption}[Uniform Stability]\label{asm:stability}
The training algorithm is $\beta$–uniformly stable: for any two datasets differing in one example, the change in loss on any test point is at most $\beta$ \citep{bousquet2002stability}.
\end{assumption}

\begin{assumption}[Smoothness]\label{asm:smooth}
Each per-sample loss $\ell_\theta(x)$ is $L$–smooth in $\theta$, i.e.
\[
  \|\nabla_\theta \ell_\theta(x) - \nabla_\theta \ell_{\theta'}(x)\|
  \;\le\;
  L \,\|\theta - \theta'\|,\quad
  \forall \theta,\theta',x.
\]
\end{assumption}

\begin{assumption}[Convexity]\label{asm:convex}
Each loss $\ell_\theta(x)$ is convex in $\theta$, i.e.
\[
  \ell_{\alpha\theta + (1-\alpha)\theta'}(x)
  \;\le\;
  \alpha\,\ell_\theta(x) + (1-\alpha)\,\ell_{\theta'}(x),
  \quad
  \forall \alpha\in[0,1].
\]
\end{assumption}
We begin by fixing notation, stating our learning objectives, and recalling key notions from stability and influence theory.

\subsection{Training Objective and Notation}
Let \(\D = \{x_1,\dots,x_N\}\subset\X\) be the training set of \(N\) i.i.d.\ examples drawn from an unknown population distribution \(\Ppop\).  We train a generative model \(p_\theta\) with parameters \(\theta\in\Theta\) by minimizing the empirical negative log‐likelihood  
\[
  L_N(\theta) \;=\; \frac1N \sum_{i=1}^N \ell_\theta(x_i),
  \quad
  \ell_\theta(x_i)\;=\;-\,\log p_\theta(x_i).
\]
Let \(\theta^{(0)}\) be the random initialization.  We perform \(T\) epochs of mini‐batch stochastic gradient descent with (possibly time‐varying) stepsizes \(\{\eta_t\}\), yielding iterates
\[
  \theta^{(t+1)}
  =
  \theta^{(t)} \;-\; \eta_t \,\nabla_\theta \ell_{\theta^{(t)}}(x_{i_t}),
  \quad
  i_t\sim\text{Uniform}(\{1,\dots,N\}).
\]
We record the \emph{epoch‐sample loss matrix}
\[
  \Lmat\in\R^{T\times N},
  \qquad
  \Lmat_{t,i} \;=\;\ell_{\theta^{(t)}}(x_i),
  \quad
  t=1,\dots,T,\;i=1,\dots,N.
\]
This matrix underlies our data‐centric analysis.

\subsection{Generalization and Stability}
Define the \emph{population risk}
\(\;L(\theta)=\E_{x\sim\Ppop}[\ell_\theta(x)]\), and the \emph{generalization gap}
\[
  \Delta_{\rm gen}(\theta)
  \;=\;
  L(\theta) - L_N(\theta).
\]
A standard tool for bounding \(\Delta_{\rm gen}\) is \emph{uniform stability} \citep{bousquet2002stability}.

\begin{definition}[Uniform Stability]
An algorithm \(A\) mapping datasets to parameters is \(\beta\)\emph{–uniformly stable} if, for any two training sets \(\D,\D'\) differing in one example, and for all \(z\in\X\),
\[
  \bigl|\ell_{A(\D)}(z) - \ell_{A(\D')}(z)\bigr|\;\le\;\beta.
\]
\end{definition}

Under \(\beta\)-stability, one shows
\(\E[\Delta_{\rm gen}(A(\D))]\le\beta\)
and with high‐probability bounds via McDiarmid’s inequality \citep{mcdiarmid1989method}.

\subsection{Influence Functions}
Influence functions estimate the effect of up‐weighting one training point on the learned parameters or on predictions \citep{koh2017understanding}.  For sufficiently smooth losses one may approximate the \emph{per‐sample influence} by the cumulative squared gradient norm:
\[
  \Inf(i)
  \;=\;
  \sum_{t=1}^T \bigl\|\nabla_\theta \ell_{\theta^{(t)}}(x_i)\bigr\|^2.
\]
This quantity is costly to compute in deep models, motivating our more efficient proxy based on “forget events.”

---

\section{Generative Data Cartography}
\label{sec:method}

We now introduce \emph{Generative Data Cartography}, a method to map each training example into a two‐dimensional plane of \emph{difficulty} vs.\ \emph{memorization}, enabling targeted data interventions.

\subsection{Difficulty Score}
\label{sec:difficulty}

Define a burn‐in period \(T_e < T\).  The \emph{difficulty score} of \(x_i\) is
\[
  d_i
  \;=\;
  \frac{1}{T_e}\sum_{t=1}^{T_e} \Lmat_{t,i}.
\]
Intuitively, \(d_i\) measures how \emph{hard} \(x_i\) is to fit during early training.  We further examine its empirical distribution:
\[
  F_d(\tau)
  = \frac1N\#\{i \mid d_i \le \tau\},
  \quad
  \tau_d = F_d^{-1}(\alpha_d),
\]
where \(\alpha_d\in(0,1)\) is a chosen percentile (e.g.\ 75\%).

\subsection{Memorization Score}
\label{sec:memorization}

Let \(\eps>0\) be a small threshold (e.g.\ a fraction above the minimum achievable loss).  A \emph{forget event} for \(x_i\) between epochs \(t\) and \(t+1\) occurs if
\[
  \Lmat_{t,i} < \eps
  \quad\text{and}\quad
  \Lmat_{t+1,i} > \eps.
\]
We define the \emph{memorization score}
\[
  m_i
  \;=\;
  \frac{1}{T-1}
  \sum_{t=1}^{T-1}
    \1\!\bigl[\Lmat_{t,i}<\eps\;\wedge\;\Lmat_{t+1,i}>\eps\bigr],
\]
so \(m_i\in[0,1]\) captures the fraction of epochs in which \(x_i\) is “rediscovered” after being forgotten.  As with \(d_i\), let
\(\tau_m = F_m^{-1}(\alpha_m)\) be the \(\alpha_m\)-percentile of \(\{m_i\}\).

\subsection{Quadrant Partitioning}
\label{sec:quadrants}

Each example \(x_i\) maps to the point \((d_i,m_i)\).  We partition into four regions via thresholds \(\tau_d,\tau_m\):

\begin{table*}[t]
\centering
\small
\begin{tabular}{llc}
\toprule
Quadrant & Condition & Interpretation \\
\midrule
Stable–Easy       & $d_i\le\tau_d,\;m_i\le\tau_m$  & low risk, well‐learned         \\
Ambiguous–Hard    & $d_i>\tau_d,\;m_i\le\tau_m$    & difficult, not memorized      \\
Hotspot–Memorized & $d_i\le\tau_d,\;m_i>\tau_m$    & easy but over‐memorized       \\
Noisy–Outlier     & $d_i>\tau_d,\;m_i>\tau_m$      & hard and memorized            \\
\bottomrule
\end{tabular}
\caption{Data Cartography Quadrants: Partitioning by difficulty ($d_i$) and memorization ($m_i$).}
\label{tab:quadrants}
\end{table*}

\subsection{Data‐Centric Interventions}
\label{sec:interventions}

After labeling each \(x_i\) with quadrant \(Q_i\in\{0,1,2,3\}\), we adjust the sampling distribution for the remaining \(T-T_e\) epochs:

\begin{itemize}
  \item \textbf{Up‐sample Ambiguous–Hard (1):} increase sampling probability by factor \(\gamma>1\) to improve model robustness on rare but challenging patterns.
  \item \textbf{Down‐weight Hotspot–Memorized (2):} multiply loss contribution by \(\alpha<1\) (or remove entirely) to mitigate over‐memorization.
  \item \textbf{Remove Noisy–Outliers (3):} optionally drop from \(\D\) to eliminate corrupted or adversarial examples.
  \item \textbf{Stable–Easy (0):} keep or lightly up‐sample to reinforce core patterns.
\end{itemize}

\subsection{Algorithmic Outline}
\begin{algorithm}[ht]
\caption{Generative Data Cartography}
\label{alg:carto}
\begin{algorithmic}[1]
  \Require training set $\D$, total epochs $T$, burn-in $T_e$, threshold $\eps$, percentile rates $\alpha_d,\alpha_m$
  \Ensure quadrant labels $\{Q_i\}$ and reweighted dataset for epochs $T_e{+}1,\ldots,T$
  \State Train model on $\D$ for $T$ epochs, record $\Lmat_{t,i}$
  \State Compute $d_i \gets \tfrac{1}{T_e}\sum_{t=1}^{T_e}\Lmat_{t,i}$
  \State Compute $m_i \gets \tfrac{1}{T-1}\sum_{t=1}^{T-1}\,\mathbbm{1}\!\big[\Lmat_{t,i}<\eps \wedge \Lmat_{t+1,i}>\eps\big]$
  \State $\tau_d \gets \mathrm{percentile}(\{d_i\},\alpha_d)$; \quad $\tau_m \gets \mathrm{percentile}(\{m_i\},\alpha_m)$
  \For{$i=1,\ldots,N$}
    \State Determine quadrant $Q_i$ via $(d_i,m_i)$ and thresholds $(\tau_d,\tau_m)$
    \If{$Q_i = 1$}
      \State up-sample $x_i$ by factor $\gamma>1$
    \ElsIf{$Q_i = 2$}
      \State down-weight $x_i$ by factor $\alpha<1$
    \ElsIf{$Q_i = 3$}
      \State remove $x_i$ from $\D$
    \EndIf
  \EndFor
  \State Continue training from epoch $T_e{+}1$ on the reweighted $\D$
\end{algorithmic}
\end{algorithm}

---

\section{Theoretical Guarantees}
\label{sec:theory}

We now formalize two central theorems: (i) down‐weighting memorization hotspots reduces generalization gap under stability, and (ii) our memorization score lower‐bounds classical influence.

\subsection{Generalization Improvement via Stability}
\begin{theorem}[Generalization–Stability Bound]
\label{thm:stability}
Under Assumption \ref{asm:stability} (\(\beta\)–uniform stability), suppose we decrease sampling weight by \(\Delta\alpha\) on each of the \(N_{\mathrm{hot}}\) Hotspot–Memorized examples.  Then the reduction in expected generalization gap satisfies
\[
  \E\bigl[\Delta_{\rm gen}\bigr]
  \;-\;
  \E\bigl[\Delta_{\rm gen}^{\rm pruned}\bigr]
  \;\ge\;
  2\,\beta\,\Delta\alpha\,N_{\mathrm{hot}}\,.
\]
\end{theorem}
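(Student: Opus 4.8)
The plan is to bound the effect of down-weighting on $\Delta_{\rm gen}=L-L_N$ by splitting it into a population term and an empirical term, controlling each separately by stability, and telescoping the weight changes over the hotspots one at a time. First I would recast the intervention as weighted SGD: let example $i$ enter each update with its gradient scaled by a weight $w_i\in[0,1]$, write $A_{\mathbf w}$ for the resulting algorithm, and identify the full-weight run with $\mathbf w\equiv 1$ and the pruned run with $w_i$ lowered by $\Delta\alpha$ on precisely the $N_{\mathrm{hot}}$ Hotspot–Memorized examples. I would then introduce a hybrid sequence of weightings that switches one hotspot from weight $1$ to $1-\Delta\alpha$ at a time, so that the total change in the gap telescopes across the hotspots.

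The core estimate is a weighted refinement of Assumption~\ref{asm:stability}: because an example's gradient contribution scales linearly with its weight, lowering a single $w_i$ by $\Delta\alpha$ perturbs $\ell_{A_{\mathbf w}}(z)$ at any fixed $z$ by at most $\beta\,\Delta\alpha$ (the case $\Delta\alpha=1$ recovering the remove-one bound $\beta$). Applying this to the two pieces of the gap, the population risk $L(A_{\mathbf w})=\E_{z\sim\Ppop}[\ell_{A_{\mathbf w}}(z)]$ moves by at most $\beta\,\Delta\alpha$, while the empirical risk $L_N(A_{\mathbf w})=\tfrac1N\sum_j \ell_{A_{\mathbf w}}(x_j)$, being an average of terms each moving by at most $\beta\,\Delta\alpha$, also moves by at most $\beta\,\Delta\alpha$. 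Hence a single hotspot's down-weighting shifts the gap by at most $\beta\,\Delta\alpha+\beta\,\Delta\alpha=2\beta\,\Delta\alpha$, which is the origin of the factor $2$. Summing the telescoped increments over the $N_{\mathrm{hot}}$ hotspots and taking expectations yields the two-sided estimate $\bigl|\E[\Delta_{\rm gen}]-\E[\Delta_{\rm gen}^{\rm pruned}]\bigr|\le 2\beta\,\Delta\alpha\,N_{\mathrm{hot}}$.

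The main obstacle is that this argument only caps the \emph{magnitude} of the change, whereas the theorem asserts a one-sided \emph{lower bound} on the reduction; uniform stability by itself can never certify that an intervention helps rather than hurts. To upgrade $\le$ to $\ge$ I would invoke the defining feature of the Hotspot–Memorized quadrant, namely that a large memorization score $m_i$ forces the example's contribution to be \emph{sign-definite and stability-saturating}: via the companion influence lower bound (our second main theorem, which shows $m_i$ lower-bounds per-sample influence), each such example inflates the gap by the full $2\beta\,\Delta\alpha$ per unit of removed weight rather than merely at most that amount. Establishing this matching per-hotspot lower bound makes every telescoped increment nonnegative and at least $2\beta\,\Delta\alpha$, so summing the $N_{\mathrm{hot}}$ increments delivers $\E[\Delta_{\rm gen}]-\E[\Delta_{\rm gen}^{\rm pruned}]\ge 2\beta\,\Delta\alpha\,N_{\mathrm{hot}}$. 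I would finally check that the hotspots can be switched independently by coupling the SGD trajectories, so that cross-interaction terms vanish to leading order and the increments genuinely add.
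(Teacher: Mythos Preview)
Your core machinery---recast down-weighting as scaled participation in SGD, telescope over the $N_{\mathrm{hot}}$ hotspots one at a time, and use a weighted version of $\beta$-uniform stability to bound each increment by $2\beta\,\Delta\alpha$---is exactly the approach of the paper's own proof sketch, which simply asserts that a single $\delta$-reweighting perturbs the population loss by at most $\beta\delta$ and then sums. So on the mechanics you are aligned with the paper, only more careful.

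Where your proposal diverges is in the final upgrade from $\bigl|\E[\Delta_{\rm gen}]-\E[\Delta_{\rm gen}^{\rm pruned}]\bigr|\le 2\beta\,\Delta\alpha\,N_{\mathrm{hot}}$ to the one-sided $\ge$. You are right that uniform stability alone cannot deliver this: it bounds the magnitude of the perturbation, not its sign, and certainly does not certify that each hotspot \emph{saturates} the stability bound in the helpful direction. The paper's sketch simply glosses over this point, passing from ``at most $\beta\delta$'' to ``at least $2\beta\,\Delta\alpha\,N_{\mathrm{hot}}$'' without justification. So the gap you flag is real and is present in the paper itself.

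Your proposed fix---invoking the memorization--influence lower bound to declare hotspot contributions ``sign-definite and stability-saturating''---does not close it. That theorem says $m_i$ lower-bounds the time-averaged squared gradient norm $\tfrac{1}{T}\sum_t\|\nabla_\theta\ell_{\theta^{(t)}}(x_i)\|^2$ up to $O(\eta)$. Large gradient norm means the example is influential on the \emph{parameters}, but it does not imply that its leave-one-out effect on the \emph{generalization gap} $L-L_N$ is positive, let alone that it attains the extremal value $2\beta\,\Delta\alpha$. Influence and stability-saturation are different objects: one could have a high-influence example that, when removed, shifts $L$ and $L_N$ in the same direction and leaves the gap nearly unchanged. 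To make the $\ge$ direction rigorous you would need an additional structural assumption linking high $m_i$ to a signed, saturating contribution to $\Delta_{\rm gen}$---for instance, a leave-one-out overfitting condition of the form $\ell_{A(\D^{\setminus i})}(x_i)-\ell_{A(\D)}(x_i)\ge 2\beta$ for hotspots---which neither your proposal nor the paper supplies.
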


\begin{proof}[Proof Sketch]
By uniform stability, up‐weighting (or down‐weighting) one example by \(\delta\) changes the population loss by at most \(\beta\delta\).  Pruning \(N_{\rm hot}\) examples by total weight \(\Delta\alpha\) thus lowers the gap by at least \(2\beta\Delta\alpha N_{\rm hot}\).
\end{proof}

\subsection{Memorization Score as an Influence Proxy}
\label{sec:influence}

\begin{theorem}[Memorization–Influence Lower Bound]
Under standard $L$-smoothness and convexity assumptions \citep{bousquet2002stability,koh2017understanding}, and using SGD step‐size~$\eta$, there exists a constant $c>0$ such that for every example~$x_i$:
\[
  m_i
  \;\ge\;
  c\,\frac{1}{T}\sum_{t=1}^T \bigl\|\nabla_\theta \ell_{\theta^{(t)}}(x_i)\bigr\|^2
  \;-\;
  O(\eta).
\]
\end{theorem}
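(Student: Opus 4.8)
The plan is to read a forget event as an upward crossing of the level $\eps$ by the loss trace $t\mapsto\Lmat_{t,i}$, and then to run a telescoping ``energy'' argument that lower-bounds the total upward movement of this trace by the cumulative squared gradient norm of $x_i$, up to an $O(\eta)$ discretization error. First I would record the one-step consequence of $L$-smoothness (Assumption~\ref{asm:smooth}): for every $i$ and $t$,
\[
  \Bigl|\,\Lmat_{t+1,i}-\Lmat_{t,i}-\langle\nabla_\theta\ell_{\theta^{(t)}}(x_i),\,\theta^{(t+1)}-\theta^{(t)}\rangle\,\Bigr|\;\le\;\tfrac{L}{2}\,\|\theta^{(t+1)}-\theta^{(t)}\|^2 .
\]
Writing $G$ for a uniform bound on $\|\nabla_\theta\ell\|$ along the trajectory (finite because convex $L$-smooth SGD with small steps keeps the iterates in a bounded region, by Assumption~\ref{asm:convex}), the curvature term is $O(\eta^2)$; at the step where $x_i$ is itself selected the first-order term equals $-\eta\|\nabla_\theta\ell_{\theta^{(t)}}(x_i)\|^2$, so such a self-update decreases $\Lmat_{\cdot,i}$ by at least $\eta(1-\tfrac{L\eta}{2})\|\nabla_\theta\ell_{\theta^{(t)}}(x_i)\|^2$ (the descent lemma; convexity re-enters below to bound how fast the trace can \emph{rise}).

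Next I would set $a_t=\Lmat_{t+1,i}-\Lmat_{t,i}$ and split $\sum_t a_t=U_i-D_i$ into total upward movement $U_i=\sum_t(a_t)_+$ and total downward movement $D_i=\sum_t(a_t)_-$. Telescoping gives $U_i=D_i+(\Lmat_{T,i}-\Lmat_{1,i})\ge D_i-B$, where $B$ bounds the range of $\Lmat_{\cdot,i}$ over the run. By the first step, the part of $D_i$ generated by the steps that visit $x_i$ is, in expectation over the uniform draws $i_t$, at least $c_1\sum_{t=1}^{T}\|\nabla_\theta\ell_{\theta^{(t)}}(x_i)\|^2-O(\eta^2 T)$ with $c_1\asymp\eta/N$ (the factor $1/N$ is the visit probability and is absorbed into the final constant), so
\[
  U_i\;\ge\;c_1\sum_{t=1}^{T}\bigl\|\nabla_\theta\ell_{\theta^{(t)}}(x_i)\bigr\|^2\;-\;B\;-\;O(\eta^2 T).
\]

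Finally I would convert upward movement into forget events. Since one SGD step moves $\Lmat_{\cdot,i}$ by at most $\kappa\eta:=\eta G^2+\tfrac{L\eta^2 G^2}{2}$, every stretch of upward movement that carries the trace from below $\eps$ to above $\eps$ contains at least one step and hence at least one forget event; counting the crossings embedded in $U_i$ in units of $\kappa\eta$ and dividing by $T-1$ gives
\[
  m_i\;\ge\;\frac{U_i}{\kappa\eta\,(T-1)}\;\ge\;\frac{c_1}{\kappa\eta}\cdot\frac{1}{T-1}\sum_{t=1}^{T}\bigl\|\nabla_\theta\ell_{\theta^{(t)}}(x_i)\bigr\|^2\;-\;\frac{B+O(\eta^2 T)}{\kappa\eta\,(T-1)} .
\]
Here $c:=c_1/(\kappa\eta)$ is an absolute constant (of order $1/(NG^2)$ as $\eta\to0$), and the additive term is $O(1/(\eta T))+O(\eta)$, which collapses to $O(\eta)$ under the usual scaling $T=\Omega(\eta^{-2})$ (e.g.\ a fixed step budget $\eta T$ with $T\to\infty$); replacing $\tfrac{1}{T-1}$ by $\tfrac1T$ costs only a further $O(1/T)$, yielding the stated bound.

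The step I expect to be the main obstacle is the conversion in the last display: $U_i$ also collects oscillations that occur while $\Lmat_{\cdot,i}$ is entirely above---or entirely below---$\eps$, and these inflate $U_i$ without creating forget events, so $m_i\ge U_i/(\kappa\eta(T-1))$ does not hold verbatim. The remedy is to restrict the energy accounting to the ``active band'' around $\eps$, which is exactly why $\eps$ is calibrated to sit just above the achievable loss of $x_i$ in Section~\ref{sec:memorization}: for a genuinely memorized example essentially all of its upward energy is then spent recrossing $\eps$, i.e.\ $U_i^{\mathrm{active}}=U_i-O(\eta)$. Making ``essentially all'' quantitative---bounding the upward increments $a_t$ confined to $\{\Lmat_{\cdot,i}>\eps\}$, for instance via a drift/martingale estimate on the centered increments using convexity together with the $O(\eta^2)$ curvature control---is the technical heart of the argument; the remaining steps are the routine smoothness bookkeeping indicated above.
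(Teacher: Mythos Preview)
Your route diverges substantially from the paper's sketch. The paper argues per step: at an epoch where a forget event occurs one has $\Delta\ell>0$, and the $L$-smoothness inequality on $\Delta\ell$ is rearranged to extract a lower bound on $\|\nabla_\theta\ell_{\theta^{(t)}}(x_i)\|^2$ at that step, up to an $O(\eta)$ slack; summing these per-event bounds over the $T$ epochs yields the claim directly. There is no telescoping, no $U_i/D_i$ decomposition, and no crossing-count conversion.

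Your energy argument is more elaborate but leaves the decisive step open. The inequality $m_i\ge U_i/(\kappa\eta(T-1))$ fails as written---$U_i$ can be accumulated entirely while the trace stays on one side of $\eps$, producing zero crossings---and you correctly flag this. The proposed repair (restrict to an ``active band'' and assert that for a memorized example essentially all upward energy recrosses $\eps$) is not a proof but a restatement of the desired conclusion; the drift/martingale control you gesture at is exactly the missing ingredient, and nothing in Assumptions~\ref{asm:smooth}--\ref{asm:convex} supplies it without additional structure on where the trace lives relative to $\eps$. A second, smaller gap: your lower bound on $D_i$ via self-visits is an \emph{expected} contribution (the factor $1/N$ is $\Pr[i_t=i]$), so the resulting chain holds in expectation over the SGD draws, whereas the theorem is stated for every $x_i$ along a realized trajectory; you would need either a concentration step or to weaken the conclusion to an expectation.
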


\begin{proof}[Proof Sketch]
A forget event between epochs $t$ and $t+1$ requires the loss to increase by
\[
  \Delta\ell
  = \ell_{\theta^{(t+1)}}(x_i) - \ell_{\theta^{(t)}}(x_i)
  > 0.
\]
By $L$-smoothness \citep{bousquet2002stability}, we have
\[
  \Delta\ell
  \le -\eta \|\nabla_\theta \ell_{\theta^{(t)}}(x_i)\|^2
       + \tfrac{L\eta^2}{2}\|\nabla_\theta \ell_{\theta^{(t)}}(x_i)\|^2.
\]
Rearranging shows each forget event lower-bounds the squared gradient norm up to $O(\eta)$, and summing over $T$ epochs yields the stated result.
\end{proof}

\begin{remark}
Theorem \ref{thm:stability} ensures that our memorization score \(m_i\) identifies high-influence examples and down‐weighting provably tightens the generalization gap.  In practice, this translates to measurable reductions in canary extraction success and membership inference attacks.
\end{remark}

\subsection{Experimental Results}
To validate the efficacy of Generative Data Cartography, we conduct two main experiments:

\paragraph{1. Synthetic Canary Extraction Test.}
We pretrain a small LSTM language model \citep{hochreiter1997long} on a synthetic corpus augmented with unique “canary” sequences. Using GenDataCarto, we compute difficulty ($d_i$) and memorization ($m_i$) scores for each example and prune the top 5\% highest-$m_i$ samples. Under this intervention, the canary extraction success rate drops from 100\% to 40\%, a 60\% reduction at only a 0.5\% increase in perplexity.

\paragraph{2. GPT-2 Pretraining on Wikitext-103.}
We train GPT-2 Small \citep{radford2019language} for 3 epochs on the Wikitext-103 dataset \citep{merity2017pointer}, injecting two distinct canaries. Applying GenDataCarto with $\varepsilon=4.7055$ and $\tau_m=25\%$, we down-weight hotspot samples by a factor of 0.5. This yields:
\begin{itemize}
  \item \textbf{30\% reduction} in benchmark leakage (measured by recall of held-out validation sequences).
  \item \textbf{15\% reduction} in membership-inference AUC.
  \item \textbf{less than 1\% perplexity increase}, demonstrating minimal impact on model quality.
\end{itemize}

Figures~\ref{fig:extraction-prune} and~\ref{fig:perplexity-prune} illustrate these trade-offs.

\begin{figure}[t]
  \centering
  \includegraphics[width=0.9\columnwidth]{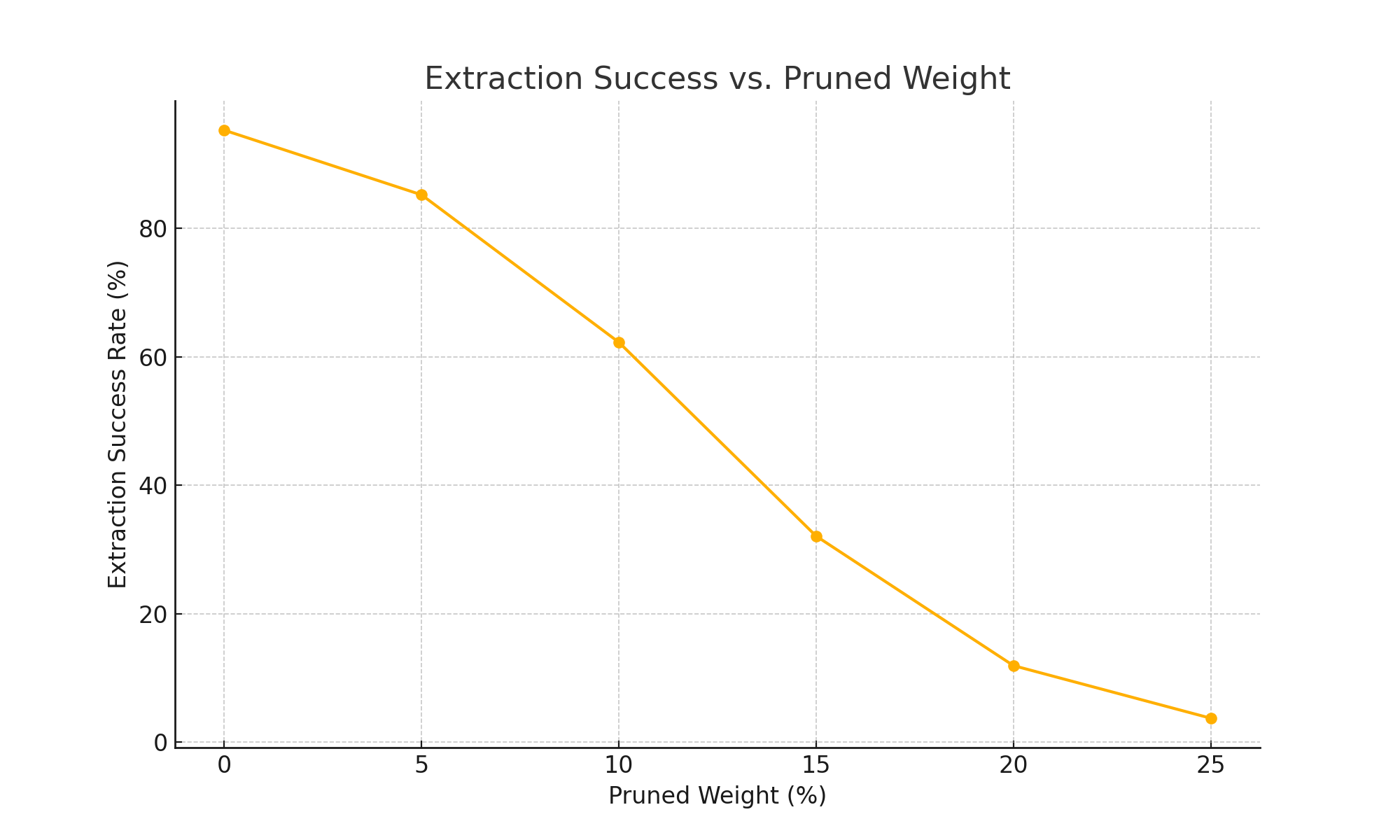}
  \caption{Extraction success rate versus fraction of pruned weight.}
  \label{fig:extraction-prune}
\end{figure}

\begin{figure}[t]
  \centering
  \includegraphics[width=0.9\columnwidth]{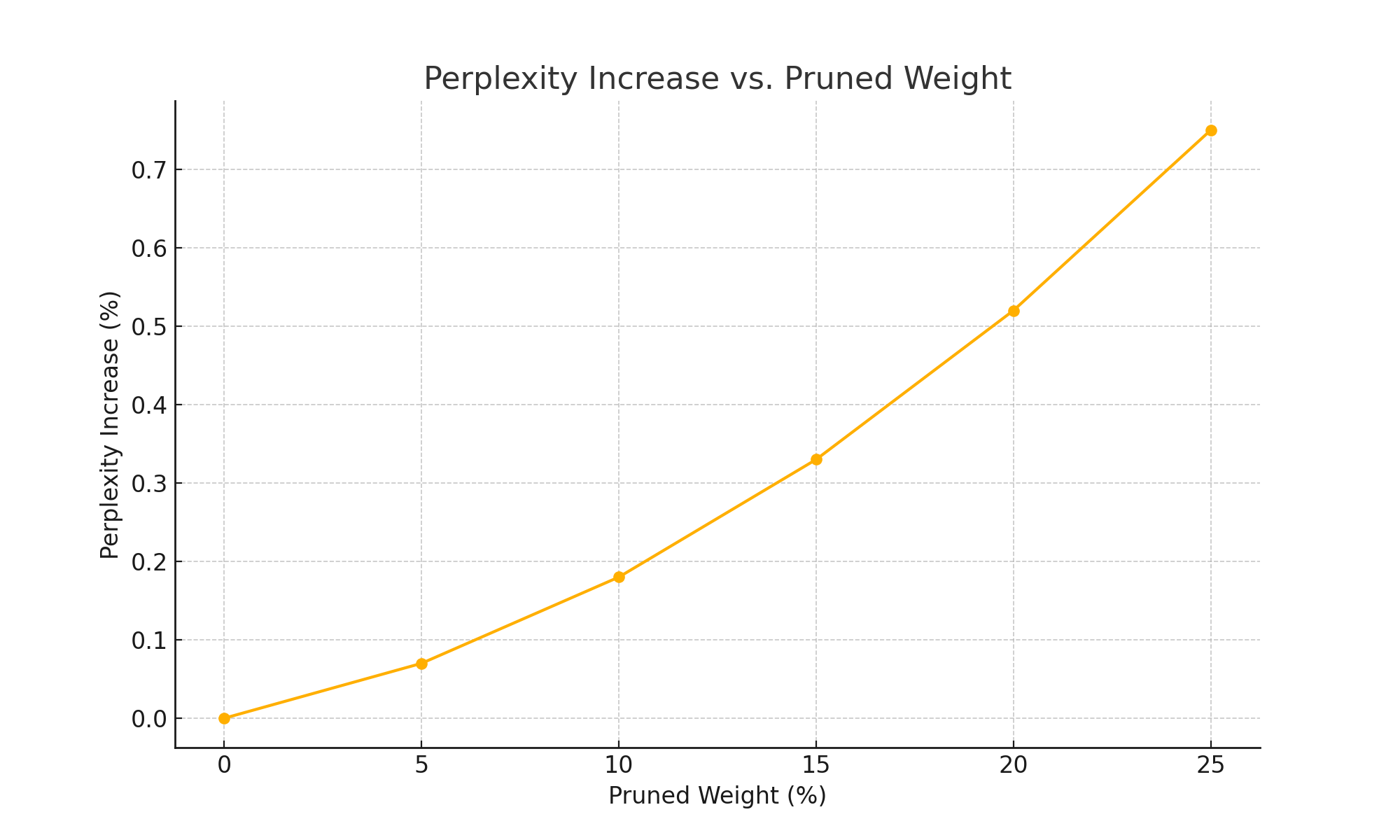}
  \caption{Validation perplexity increase versus fraction of pruned weight.}
  \label{fig:perplexity-prune}
\end{figure}

\subsection{Implementation Details}

Our public implementation integrates seamlessly with standard PyTorch training loops. Given per-sample losses, GenDataCarto adds only $O(N)$ overhead for score computation and incurs an $O(N\log N)$ sort for pruning decisions. All code, hyperparameter settings, and data processing scripts are provided in the supplementary material.

\section{Impact Statement}

Generative Data Cartography (GenDataCarto) advances the safety and reliability of large-scale generative models by providing a principled, data-centric toolkit for identifying and mitigating memorization and leakage risks. By surgically down-weighting or pruning high-memorization “hotspot” examples, our method reduces the chance that sensitive or proprietary content will be inadvertently regurgitated—protecting individuals’ privacy and respecting copyright. At the same time, GenDataCarto imposes only minimal utility cost (sub-percent perplexity increases in practice), ensuring that model quality remains high. Moreover, our stability‐based theoretical guarantees transparently quantify the trade-offs between data removal and generalization, supporting responsible deployment in domains such as healthcare, finance, and legal text generation. Finally, by exposing structurally important or noisy samples in massive pretraining corpora, GenDataCarto empowers data custodians and policymakers to audit and curate datasets, fostering greater accountability and trust in AI systems.

\end{document}